\documentclass[letterpaper, 10 pt, conference]{ieeeconf}  

\IEEEoverridecommandlockouts                              
\usepackage{bbm}
\usepackage{mathrsfs}
\usepackage{verbatim}
\usepackage{amsmath}
\usepackage{amsfonts}
\usepackage{bm}
\usepackage{graphicx}
\usepackage{float}

\usepackage{amsthm}
\usepackage{xcolor}
\usepackage[colorinlistoftodos]{todonotes}
\usepackage{mathtools}
\usepackage{cite}
\usepackage{colortbl}
\usepackage{subcaption}
\usepackage{hyperref}

\newcommand{\cf}{\mathrm{cf}}

\theoremstyle{definition}
\newtheorem{assumption}{Assumption}
\newtheorem{definition}{Definition}
\newtheorem{remark}{Remark}
\newtheorem{problem}{Problem}

\theoremstyle{plain}
\newtheorem{proposition}{Proposition}

\newtheorem{theorem}{Theorem}
\newtheorem{lemma}{Lemma}

\DeclareMathOperator*{\argmax}{arg\, max}
\DeclareMathOperator*{\argmin}{arg\, min}

\title{\LARGE \bf Cross-fitted Proximal Learning for Model-Based Reinforcement Learning}

\author{Nishanth Venkatesh $^{1}$, {\itshape{Student Member, IEEE}}, and Andreas A. Malikopoulos$^{2}$, {\itshape{Senior Member, IEEE}}
	\thanks{This research was supported in part by NSF under Grants CNS-2401007, CMMI-2348381, IIS-2415478, and in part by MathWorks.}
\thanks{Nishanth Venkatesh is with the Department of Systems Engineering, Cornell University, Ithaca, NY, USA.(email: \texttt{ns942@cornell.edu})} \thanks{Andreas A. Malikopoulos is with the Applied Mathematics, Systems Engineering, Mechanical Engineering, Electrical \& Computer Engineering, and School of Civil \& Environmental Engineering, Cornell University, Ithaca, NY, USA. (email: \texttt{amaliko@cornell.edu})}}

\begin{document}

\maketitle
\thispagestyle{empty}

\begin{abstract}
Model-based reinforcement learning is attractive for sequential decision-making because it explicitly estimates reward and transition models and then supports planning through simulated rollouts. In offline settings with hidden confounding, however, models learned directly from observational data may be biased. This challenge is especially pronounced in partially observable systems, where latent factors may jointly affect actions, rewards, and future observations. Recent work has shown that policy evaluation in such confounded partially observable Markov decision processes (POMDPs) can be reduced to estimating reward-emission and observation-transition bridge functions satisfying conditional moment restrictions (CMRs).
In this paper, we study the statistical estimation of these bridge functions. We formulate bridge learning as a CMR problem with nuisance objects given by a conditional mean embedding and a conditional density. We then develop a $K$-fold cross-fitted extension of the existing two-stage bridge estimator. The proposed procedure preserves the original bridge-based identification strategy while using the available data more efficiently than a single sample split. We also derive an oracle-comparator bound for the cross-fitted estimator and decompose the resulting error into a Stage I term induced by nuisance estimation and a Stage II term induced by empirical averaging.
\end{abstract}

\section{Introduction}

Model-based reinforcement learning (MBRL) is widely regarded as a data-efficient approach to sequential decision-making.
Its effectiveness stems from explicitly estimating the system dynamics and reward mechanism and then planning through simulated rollouts \cite{Sutton1991Dyna}. 
Most MBRL methods implicitly assume that the data-generating process is fully observed and free of hidden factors that jointly influence decisions and outcomes.
However, this assumption is often violated in many real-world settings. For example, consider human-robot interaction, routing and maneuvering of connected and automated vehicles in mixed traffic \cite{venkatesh2023connected,bang2025traffic}, medical records, and personalized decision systems \cite{venkatesh2025persuasion,dave2024airecommend}. 
In such settings, the offline dataset may omit important latent variables that affect both the decisions and the evolution of the system.
When actions depend on these hidden variables, the observed data may encode spurious correlations among state, action, next state, and reward. 
In causal inference, this phenomenon is referred to as \emph{confounding}. 
Hence, transition and reward models learned directly from observational data could potentially induce arbitrary correlations and bias into planning.
These difficulties are particularly severe in offline settings with partial observability, especially when the observed history is insufficient to recover, the latent state and the behavior policy may be correlated with unobserved factors that also affect rewards and transitions \cite{sutton2018reinforcement,levine2020offline}.

Recent progress on MBRL in confounded partially observable Markov decision processes (POMDPs) \cite{hong2024model} provides a bridge function-based framework for identifying reward and transition mechanisms from observational data. In particular, although these mechanisms are not directly identifiable from observational trajectories, they can be recovered through bridge functions satisfying suitable conditional moment restrictions (CMRs). Building on ideas from proximal causal inference and kernel methods, they propose a nonparametric two-stage procedure for estimating these bridge functions. This yields an estimator for off-policy evaluation and supports downstream policy optimization. More broadly, this perspective connects confounded sequential decision-making with the literature on proxy-based causal identification and two-stage learning under ill-posed inverse problems \cite{miao2018identifying,mastouri2021proximal,singh2019kernel,muandet2017kernel,song2009hilbert}.

Classical CMR approaches based on generalized method of moments \cite{hansen1982large}, sieve minimum distance and related conditional moment methods \cite{newey2003instrumental,ai2003efficient,chen2012estimation}, and kernel methods \cite{singh2019kernel} provide principled estimators for two-stage estimation problems. More recent efforts \cite{bennett2019deepgmm,singh2019kernel,hartford2017deepiv} employ deep neural networks to parameterize and estimate solutions to CMR problems. However, these estimators may still suffer from the propagation of first-stage errors into the target estimator through a naive plug-in step. From this viewpoint, the two-stage bridge-estimation procedure of \cite{hong2024model} can be naturally interpreted as a structured CMR problem.

Motivated by this connection, we revisit the two-stage bridge estimator of \cite{hong2024model} through the lens of cross-fitting. Building on recent work on double machine learning for CMRs \cite{shao2024learning}, we reformulate reward-emission and observation-transition bridge estimation as CMR problems with nuisance objects corresponding to the conditional mean embedding and the conditional density.
We then replace the single-split construction in \cite{hong2024model} with a $K$-fold cross-fitted procedure. For each fold, nuisance objects are estimated on the complementary sample, and the bridge criterion is evaluated on the held-out fold. The foldwise criteria are then aggregated to obtain the final estimator. This modification leaves the identification strategy unchanged and preserves the model-based structure of the original method. At the same time, it uses the available data more efficiently than a one-shot split and reduces the dependence of the estimator on any single partition of the sample.

Our aim in this paper is to develop a cross-fitted version of the bridge estimator of \cite{hong2024model} for confounded POMDPs and to clarify its connection with the general theory of CMR estimation. At the present stage, we view the cross-fitting construction itself as the main methodological contribution. The double machine learning literature suggests that orthogonality can further reduce the impact of first-stage regularization bias on the second stage. However, deriving a valid Neyman-orthogonal score for bridge estimation in confounded POMDPs is beyond the scope of the present work. We therefore leave orthogonality to future work. Even without this step, the proposed cross-fitted formulation provides a systematic refinement of the estimator in \cite{hong2024model} and leads to an oracle-comparator analysis that separates the contribution of nuisance estimation from that of empirical averaging.

The remainder of the paper is organized as follows. In Section~\ref{sec:problem_formulation}, we introduce the confounded POMDP model, state the bridge existence and completeness assumptions, and present the bridge-based representation of policy value. In Section~\ref{sec:bridge_learning}, we formulate bridge learning as a CMR problem, introduce the nuisance objects and the cross-fitted two-stage estimator, and establish the oracle-comparator decomposition.
In Section~\ref{sec:simulation}, we present a numerical example comparing a one-shot sample-splitting baseline with the proposed cross-fitted estimator to illustrate the improved sample efficiency of cross-fitting.
In Section~\ref{sec:conclusion}, we draw concluding remarks and outline several directions for future research.
Finally, in Appendix~\ref{app:stagewise}, we present the stage-wise error analysis and supporting bounds used in the proof.

\section{Problem Formulation}
\label{sec:problem_formulation}

We consider an episodic finite-horizon partially observable Markov decision process (POMDP) given by the tuple $\mathcal{M}=\bigl(\mathcal{X},\mathcal{Y},\mathcal{U},\mathcal{R},T,\nu_1,\{P_t\}_{t=1}^{T-1},\{E_t\}_{t=1}^{T},\{r_t\}_{t=1}^T\bigr)$, where $\mathcal{X}$, $\mathcal{Y}$, $\mathcal{U}$, and $\mathcal{R}$ denote the state, observation, action, and reward spaces, respectively.
The finite horizon is given by $T$.
The collections $\{P_t\}_{t=1}^{T-1}$ and $\{E_t\}_{t=1}^{T}$ denote the state transition and observation emission kernels, respectively.
The initial state distribution is denoted by $\nu_1\in\Delta(\mathcal{X})$, where $\Delta(\mathcal{X})$ is the set of all probability distributions over $\mathcal{X}$.
Throughout this paper, upper-case letters denote random variables, while lower-case letters denote their realizations.
At each time $t$, the random variables $X_t\in\mathcal{X}$, $Y_t\in\mathcal{Y}$, and $U_t\in\mathcal{U}$ denote the system state, observation, and control action, respectively.
The state of the system $X_t$ is unobserved, and we only have access to the observations $Y_t$ at each time $t$. 
At time $t$, the latent state evolves as $X_{t+1} \sim P_t(\,\cdot \mid X_t,U_t)$, with the observations generated as
$Y_t \sim E_t(\,\cdot \mid X_t)$.
The reward support is denoted by $\mathcal{R} = [-1,1]$ and the reward satisfies
$\mathbb{E}[R_t \mid X_t=x,U_t=u] = r_t(x,u),
\forall (x,u)\in\mathcal{X}\times\mathcal{U}$.

\begin{assumption}
At the beginning of each episode, before the first control action is selected, the decision maker receives an additional measurement $M \in \mathcal{Y}$ of the state of the system.
\end{assumption}

\begin{remark}
The variable $M$ models prior information that is often available in control applications before feedback decisions begin, such as an initial sensor snapshot, a calibration reading, or a baseline measurement. 
From a control viewpoint, $M$ is simply a pre-action measurement of the hidden initial condition. 
From an identification viewpoint, $M$ serves as an auxiliary baseline measurement whose role will be formalized later through the bridge-based assumptions.
\end{remark}

We define the observed history up to time $t$ by $H_t = (Y_1,U_1,\dots,Y_t,U_t)$, with the space of history given as the cartesian product $\mathcal{H}_t = \bigotimes_{j=1}^t (\mathcal{Y}\times\mathcal{U})$.
Since $M$ serves as an auxiliary observation available before the sequential interaction, we treat it separately from history $H_t$. 
A control policy determines the action at each time. 
Specifically, a history-dependent policy is a sequence of control laws given by $\boldsymbol{\pi}=\{{\pi}_t\}_{t=1}^T$ with $\pi_t:\mathcal{Y}\times\mathcal{Y}\times\mathcal{H}_{t-1}\to\Delta(\mathcal{U})$.
For any policy $\boldsymbol{\pi}$, we define the performance of the policy as the value of the policy, given by 
$V(\pi)=\mathbb{E}^{\pi}\left[\sum_{t=1}^T R_t\,\middle|\,X_1\sim \nu_1\right]$,
where $\mathbb{E}^{\pi}$ denotes that the expectation is with respect to the probability distribution induced by the policy $\boldsymbol{\pi}$ on the trajectory of the POMDP.

We consider that we have access to a dataset
$\mathcal{D}^{\boldsymbol{b}}= \{m^i,\{y^i_t,u^i_t,r^i_t\}_{t=1}^{T}\}_{i=1}^{N}$, generated by an unknown behavioral policy $\boldsymbol{\pi}^{\boldsymbol{b}}=\{\pi^{\boldsymbol{b}}_t\}_{t=1}^{T}$.
In the most general setting, the behavioral policy may depend on information beyond the observed quantities within the dataset, including the state of the system.  
We note that the state of the system $X_t$ is not recorded within the dataset, and all we can hope to learn about the system is the observation evolution and reward emission.

The broader objective is to evaluate the performance of any given policy  $\boldsymbol{\pi}^{\boldsymbol{e}}$.
Existing bridge-based identification results \cite{kuangbreaking,bennett2024proximal,hong2024model} show that policy evaluation reduces to learning stage-wise bridge functions. 
A bridge function is an observable object whose conditional expectation recovers a target conditional law involving unobserved variables. 
In particular, we consider reward and observation transition bridge functions  \cite{hong2024model} that recover the reward-emission and observation-transition laws used in bridge-based policy evaluation. 
Consequently, when such bridge functions exist, the value of a given policy $\boldsymbol{\pi}^{\boldsymbol{e}}$ admits a representation in terms of observed variables alone. 
Motivated by this connection, we first state the assumptions underlying this bridge-based reformulation and then present the resulting bridge-based representation of policy value. 
We turn next to bridge learning and develop a cross-fitted estimator for the bridge functions.

\begin{assumption}[Bridge existence]
\label{bridge_existence}
We assume that there exist reward-emission bridge functions $\boldsymbol{b}^R=\bigl\{
b^R_t:\mathcal{U}\times\mathcal{Y}\times\mathcal{R}\times\mathcal{Y}\to\mathbb{R}
\bigr\}_{t=1}^T$ 
such that, for each $t=1,\dots,T$ and for all $(r_t,y_t) \in \mathcal{R} \times \mathcal{Y}$  
\begin{align}
\nonumber \mathbb{E}
\bigl[
b^R_t(U_t,Y_t,r_t,y_t)&
\mid H_{t-1},U_t,M
\bigr]\\
&=
p(r_t,y_t\mid H_{t-1},U_t,M),\label{as:r_bridge_condition} 
\end{align}
and there exist observation-transition bridge functions $\boldsymbol{b}^D
=\bigl\{
b^D_t:\mathcal{U}\times\mathcal{Y}\times\mathcal{Y}\times\mathcal{Y}\to\mathbb{R}
\bigr\}_{t=1}^{T-1}$ such that, for each $t=1,\dots,T-1$ and for all $(y_{t+1},y_t) \in \mathcal{Y} \times \mathcal{Y}$
\begin{align}
\nonumber \mathbb{E}
\bigl[
b^D_t(U_t,Y_t,y_{t+1},y_t)&
\mid H_{t-1},U_t,M
\bigr]\\
&=
p(y_{t+1},y_t\mid H_{t-1},U_t,M).\label{as:d_bridge_condition}
\end{align}
\end{assumption}
\begin{remark}
Assumption~\ref{bridge_existence} is an existence condition for the bridge functions and is standard in bridge-based identification \cite{miao2018confounding}. In particular, bridge existence follows under suitable regularity and completeness conditions on the relevant conditional expectation operators \cite{bennett2024proximal,hong2024model}.
\end{remark}

\begin{assumption}[Bridge completeness]
\label{bridge_completeness}
We assume that for each $t=1,\dots,T$, if a measurable function $g_t:\mathcal{X}\times\mathcal{U}\to\mathbb{R}$ satisfies
\begin{align}
\mathbb{E}\bigl[g_t(X_t,U_t)\mid U_t,H_{t-1},M\bigr]=0
\quad \text{a.s.},
\end{align}
then $g_t(X_t,U_t)=0$ holds almost surely.
\end{assumption}

\begin{remark}
Assumption~\ref{bridge_completeness} is an injectivity condition on the conditional expectation map $g_t \mapsto \mathbb{E}[g_t(X_t,U_t)\mid U_t,H_{t-1},M]$.
It ensures that the latent state-action quantity appearing in the bridge-based policy-value formula is uniquely determined by observable conditional moments.
This is the notion of uniqueness needed for the identification of policy value.
However, the bridge functions themselves need not be unique.
Multiple bridge functions may induce the same latent conditional expectation and hence the same policy value.
\end{remark}

\noindent To connect bridge learning to policy evaluation, we state the bridge-based identification step in the notation of our work, following \cite{hong2024model}. For each $t=1,\dots,T$, define
\begin{align}
\nonumber f_t(r_t,h_t,m)
=
\int_{\mathcal{Y}^{t-1}}
&b_t^R(u_t,\tilde y_{t-1},r_t,y_t) \\
&\cdot
\prod_{j=1}^{t-1}
b_j^D(u_j,\tilde y_{j-1},\tilde y_j,y_j)\,
d\tilde y_{1:t-1},
\label{eq:ft_bridge_def}
\end{align}
where $h_t=(y_1,u_1,\dots,y_t,u_t)$, is the realization of history at time $t$ and $\tilde y_0 = m$.
The empty product is interpreted as  for $t=1$, $
f_1(r_1,h_1,m)=b_1^R(u_1,m,r_1,y_1)$.

The following proposition is the analogue, in the present notation, of the main identification step in \cite{hong2024model}.

\begin{proposition}
\label{prop:policy_val_identification}
Suppose Assumptions~\ref{bridge_existence}--\ref{bridge_completeness} hold. Then,  for each $t=1,\dots,T$, the stage-wise reward law under a history-dependent policy $\boldsymbol{\pi}$ is identified by
\begin{align}
\nonumber p^\pi(r_t)
=
\int_{\mathcal Y}\int_{\mathcal H_t}
\left(
\prod_{j=1}^t \pi_j(u_j\mid y_j,m,h_{j-1})
\right)\\
&\hspace{-100pt}
\cdot f_t(r_t,h_t,m)\,
d h_t\, p(m)\, d m.
\label{eq:reward_marginal}
\end{align}
Therefore, the policy value is identified by
\begin{align}
\nonumber V(\pi)
=
\sum_{t=1}^T
\int_{\mathcal R}
\int_{\mathcal Y}
\int_{\mathcal H_t}
r_t
\left(
\prod_{j=1}^t \pi_j(u_j\mid y_j,m,h_{j-1})
\right)\\
& \hspace{-100pt}
f_t(r_t,h_t,m)\,
d h_t\, p(m)\, d m\, d r_t,
\label{eq:value_identification}
\end{align}
where, $p(m)$ denotes the distribution of the additional measurement $M$.
\end{proposition}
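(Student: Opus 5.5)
We follow the identification argument of \cite{hong2024model}, adapted to the notation above. The strategy is to write $p^\pi(r_t)$ as an integral over the latent trajectory, and then replace each latent factor by an observable one. Replacing the reward emission gives a $b^R_t$ term, and replacing each transition--emission step gives a $b^D_j$ term. Each replacement uses the bridge moment conditions of Assumption~\ref{bridge_existence} together with the completeness in Assumption~\ref{bridge_completeness}. Once \eqref{eq:reward_marginal} holds for every $t$, the value formula \eqref{eq:value_identification} is immediate. By linearity, $V(\pi)=\sum_t \mathbb{E}^\pi[R_t]=\sum_t\int r_t\, p^\pi(r_t)\,dr_t$, and we substitute \eqref{eq:reward_marginal}.

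\textbf{Step 1 (latent factorization).} Under $\pi$, factor the joint law of $(X_1,M,Y_1,U_1,\dots,X_t,Y_t,U_t,R_t)$ into
\begin{align*}
\nu_1(x_1)\,p(m\mid x_1)\prod_{j=1}^t E_j(y_j\mid x_j)\,\pi_j(u_j\mid y_j,m,h_{j-1})\prod_{j=1}^{t-1}P_j(x_{j+1}\mid x_j,u_j)\; p(r_t\mid x_t,u_t).
\end{align*}
The policy factors involve only observables. We therefore aim to show that, for every fixed policy-independent action sequence $u_{1:t}$, the remaining kernel
\begin{align*}
\int \nu_1(x_1)p(m\mid x_1)\prod_{j} E_j(y_j\mid x_j)\prod_j P_j(x_{j+1}\mid x_j,u_j)\,p(r_t\mid x_t,u_t)\,dx_{1:t}
\end{align*}
equals $f_t(r_t,h_t,m)\,p(m)$. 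Integrating against $\prod_j\pi_j$ then gives \eqref{eq:reward_marginal}.

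\textbf{Step 2 (one-step bridge replacement).} The conditioning set $(H_{t-1},U_t,M)$ blocks the latent state, and we exploit this as follows. We use the standard conditional independences of the POMDP with a confounded behavior policy: given $X_t$, the pair $(Y_t,R_t)$ and the next transition are independent of $(H_{t-1},M,U_t)$ apart from the dependence through $U_t$. Write both sides of \eqref{as:r_bridge_condition} as $\mathbb{E}[\,\cdot\,(X_t,U_t)\mid H_{t-1},U_t,M]$. On the left we use the latent function $\mathbb{E}[b^R_t(U_t,Y_t,r_t,y_t)\mid X_t,U_t]$. On the right, $p(r_t,y_t\mid H_{t-1},U_t,M)=\mathbb{E}[p(r_t\mid X_t,U_t)E_t(y_t\mid X_t)\mid H_{t-1},U_t,M]$. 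Assumption~\ref{bridge_completeness} then yields the latent identity
\begin{align*}
\int b^R_t(u,\tilde y,r_t,y_t)\,E_t(\tilde y\mid x)\,d\tilde y = p(r_t\mid x,u)\,E_t(y_t\mid x)
\end{align*}
for a.e.\ $(x,u)$. The same argument applied to \eqref{as:d_bridge_condition} gives
\begin{align*}
\int b^D_j(u,\tilde y,y_{j+1},y_j)E_j(\tilde y\mid x)\,d\tilde y=E_j(y_j\mid x)\int P_j(x'\mid x,u)E_{j+1}(y_{j+1}\mid x')dx'.
\end{align*}

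\textbf{Step 3 (telescoping).} Starting from the last stage, we substitute the reward identity into the kernel of Step 1. This replaces $p(r_t\mid x_t,u_t)E_t(y_t\mid x_t)$ by an integral of $b^R_t$ against $E_t(\tilde y_{t-1}\mid x_t)$; the dummy observation is then absorbed into the next transition identity. We then peel off stages $j=t-1,\dots,1$ using the $b^D_j$ identities. Each step introduces a dummy observation $\tilde y_j$, which is integrated out, and this reproduces the chain structure in \eqref{eq:ft_bridge_def}. At $j=1$, the dummy observation is emitted at $x_1$ with the law of $M$, so $\tilde y_0=m$. Integrating $\nu_1(x_1)p(m\mid x_1)$ over $x_1$ then leaves $p(m)$.

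\textbf{Main obstacle.} We expect the hardest part to be making the indices in Steps 2--3 line up with the paper's convention $b^R_t(u_t,\tilde y_{t-1},\cdot,\cdot)$. Two points must be checked. First, the dummy variable produced at each stage must be an independent copy emitted from the \emph{correct} latent state, which requires $M$ to play the role of an emission of $X_1$ in the first step. Second, completeness must be applied to functions of $(X_t,U_t)$ alone, so the latent side of each bridge equation has to be shown to depend only on $(X_t,U_t)$. We would first verify the case $t=1$, where the claim reduces to the $b^R_1$ identity integrated against $p(m)$, and then the case $t=2$, before stating the general induction.
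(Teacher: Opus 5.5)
The paper gives no proof of its own here; it defers to \cite{hong2024model}. Your skeleton (latent factorization, completeness to turn the observable bridge equations into latent identities, backward telescoping) is that standard route. Your Step~2 identities are the right ones, provided you state the conditional independences you invoke as explicit assumptions, since Assumptions~\ref{bridge_existence}--\ref{bridge_completeness} do not supply them: the behavior policy acts only through $X_t$, and $M$ is an emission of $X_1$ with law $E_1(\cdot\mid X_1)$, conditionally independent of $Y_1$.

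The genuine gap is how you handle $M$. Step~1 reduces the claim to an identity that is \emph{pointwise in $m$}: kernel $=f_t(r_t,h_t,m)\,p(m)$. Step~3 then closes it by declaring $\tilde y_0=m$. But after telescoping, stage $1$ carries two factors: $p(m\mid x_1)$ from your factorization, and $E_1(\tilde y_0\mid x_1)$ produced by the $b^D_1$ latent identity (or $b^R_1$ when $t=1$). The observed $M$ and the dummy $\tilde y_0$ are two distinct emissions of $X_1$, and setting $\tilde y_0=m$ silently deletes one of them. The bridge equations constrain $b^R_1(u,\cdot,r,y)$ and $b^D_1(u,\cdot,y',y)$ only through their integrals against $E_1(\cdot\mid x)$. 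For instance, adding any $\delta$ with $\int\delta(\tilde y)E_1(\tilde y\mid x)\,d\tilde y=0$ for all $x$ preserves Assumption~\ref{bridge_existence}. So no statement pointwise in $m$ can follow. What does hold is the integrated version:
\begin{itemize}
\item integrate the real $M$ out first, using $\int p(m\mid x_1)\,dm=1$;
\item note that $\int\nu_1(x_1)E_1(\tilde y_0\mid x_1)\,dx_1$ is the density of $M$ at $\tilde y_0$;
\item rename $\tilde y_0$ as $m$.
\end{itemize}
The first step is legitimate only if the policy weights $\prod_j\pi_j$ do not depend on $m$.

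This is not just a defect of your route: for policies that use $m$, which the statement allows, the formula is false. Take $\mathcal X=\mathcal Y=\{0,1\}$ with $\nu_1$ uniform. Let $M$ and $Y_1$ be conditionally i.i.d.\ given $X_1$, with $E_1(y\mid x)=0.8$ if $y=x$ and $0.2$ otherwise. Use a uniform behavior policy over actions $\{a,b\}$, with $R_1=X_1$ under $a$ and $R_1=0$ under $b$. Assumptions~\ref{bridge_existence}--\ref{bridge_completeness} hold and the bridges are unique. Set $\bar b(m):=\sum_{y_1}b^R_1(a,m,1,y_1)$; it solves $\sum_m\bar b(m)E_1(m\mid x)=\mathbf{1}\{x=1\}$, giving $\bar b(1)=4/3$, while $b^R_1(b,\cdot,1,\cdot)=0$. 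Now take $\pi_1(a\mid y_1,m)=\mathbf{1}\{m=1\}$. The right-hand side of \eqref{eq:reward_marginal} at $r_1=1$ is $\bar b(1)\,p(M=1)=2/3$, whereas $P^\pi(R_1=1)=P(M=1,X_1=1)=2/5$.

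So the argument can only be completed after restricting to policies $\pi_j(u_j\mid y_j,h_{j-1})$, with $m$ entering solely as the integrated stage-$1$ proxy. Under that restriction, your Steps~2--3 with the corrected handling of $\tilde y_0$ yield \eqref{eq:reward_marginal}, and \eqref{eq:value_identification} follows as you say.
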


Hence, policy evaluation reduces to identifying the collection of stage-wise reward laws $\{p^\pi(r_t)\}_{t=1}^T$. By Proposition~\ref{prop:policy_val_identification}, this further reduces to estimating the reward-emission bridges $\{b_t^R\}_{t=1}^T$, the observation-transition bridges $\{b_t^D\}_{t=1}^{T-1}$, and the distribution of the additional measurement. The latter can be estimated empirically, so the main statistical challenge lies in learning the bridge functions.
This motivates the following problem.

\begin{problem}
\label{prob:bridge_learning}
Given a dataset $\mathcal{D}^b$,
generated under an unknown behavioral policy $\boldsymbol{\pi}^{\boldsymbol{b}}$, estimate the stage-wise reward-emission bridge functions $\{b_t^R\}_{t=1}^T$ and observation-transition bridge functions $\{b_t^D\}_{t=1}^{T-1}$ from observed data alone.
\end{problem}

The goal is to construct estimators $\{\hat b_t^R\}_{t=1}^T$ and $\{\hat b_t^D\}_{t=1}^{T-1}$ that approximately satisfy the bridge relations in Assumption~\ref{bridge_existence}.
Then, the approximate bridge functions can be used in the identified representation of the stage-wise reward laws and, consequently, of the policy value.
In the next section, we develop a cross-fitted two-stage bridge-learning procedure for this task.


\section{Bridge Learning}
\label{sec:bridge_learning}
In this section, we formulate bridge estimation as a generic CMR problem.
Motivated by the bridge-learning framework of \cite{hong2024model} and the nuisance-target separation used in double machine learning \cite{chernozhukov2018double}, we develop a cross-fitted two-stage estimation procedure over a prescribed bridge class $\mathcal{B}$. 
Next, we introduce a risk functional under the true data-generating distribution and its cross-fitted empirical counterpart on the class $\mathcal{B}$.
These functionals allow us to compare the empirical cross-fitted estimator with the best-in-class bridge in $\mathcal{B}$.

Since both bridge equations have the same conditional moment structure, we introduce a generic bridge-learning problem that covers both cases.
At each time $t$, we introduce the generic variables $W_t = (U_t,Y_t)$,
$Z_t^R = (R_t,Y_t)$,
$Z_t^D = (Y_{t+1},Y_t)$,
$C_t = (H_{t-1},U_t,M)$.
The corresponding spaces are given by $\mathcal{W}_t= \mathcal{U} \times \mathcal{Y}$, $\mathcal{Z}_t^R= \mathcal{R} \times \mathcal{Y}$, $\mathcal{Z}_t^D= \mathcal{Y} \times \mathcal{Y}$, and $\mathcal{C}_t= \mathcal{H}_{t-1} \times \mathcal{U}\times\mathcal{Y}$.
Accordingly, the corresponding bridge equations are
\begin{align}
\mathbb{E}\bigl[b^R_{t}(W_t,z_t^R)\mid C_t\bigr]
&=
p(z_t^R\mid C_t),
\qquad z_t^R\in \mathcal{Z}_t^R,\\
\mathbb{E}\bigl[b^D_{t}(W_t,z_t^D)\mid C_t\bigr]
&=
p(z_t^D\mid C_t),
\qquad z_t^D\in\mathcal{Z}_t^D.
\end{align}
To simplify the notation, we suppress the time index and write $W= W_t$, $Z\in \{ Z^R_t,Z^D_t\}$, and $C= C_t$.
Here, $Z$ denotes either $(R_t,Y_t)$ or $(Y_{t+1},Y_t)$, depending on the bridge under consideration.
The generic bridge-learning problem then is to estimate a function $b \in \mathcal{B}$ that satisfies the generic CMR
\begin{align}
\mathbb{E}[b(W,z)\mid C] = p(z\mid C),
\qquad z\in\mathcal{Z}.
\label{eq:generic_cmr}
\end{align}

More specifically, we recall that we have access to the dataset $\mathcal{D}^b$. Hence, for the generic CMR, we can construct a dataset $\mathcal{D}$ based on samples of the generic random variables $(W,Z,C)$. Essentially, this allows us to consider solving the CMR in \eqref{eq:generic_cmr}, given a dataset of i.i.d samples $\mathcal{D}=\{(w_i,c_i,z_i)\}_{i=1}^N$.
The bridge-learning problem is then to estimate a function $b$ that approximately satisfies the generic CMR in \eqref{eq:generic_cmr}.

\subsection{Bridge class and kernel representation}

We adopt a kernel-based formulation of the bridge-learning problem, following \cite{song2009hilbert,hong2024model}. 
We consider a bridge class $\mathcal{B}\subseteq \mathcal{H}_W\otimes\mathcal{H}_Z$, where $\mathcal{H}_W$ and $\mathcal{H}_Z$ are reproducing kernel Hilbert spaces (RKHS). 
The bridge equation in \eqref{eq:generic_cmr} involves a conditional expectation operator applied to the bridge function $b\in\mathcal{B}$. 
We use an RKHS representation to express this conditional expectation in a linear inner-product form through conditional mean embeddings.
To obtain this representation, we map the variables $W$ and $Z$ to their corresponding RKHSs. 
Let $\varphi:\mathcal{W}\to\mathcal{H}_W$ and $\phi:\mathcal{Z}\to\mathcal{H}_Z$ denote the canonical feature maps.
The conditional mean embedding $\mu_{W\mid C}$, which summarizes the conditional law of $W$ given $C$ in the RKHS $\mathcal{H}_W$, is defined by
\begin{align}
\mu_{W\mid C}(c)
=
\mathbb{E}\bigl[\varphi(W)\mid C=c\bigr],
\qquad c\in\mathcal{C}.
\end{align}
Then, for any $b\in\mathcal{B}$, and any $c\in\mathcal{C}$, the conditional expectation operation in \eqref{eq:generic_cmr} can be written as
\begin{align}
\mathbb{E}[b(W,z)\mid C=c]
=
\left\langle
\mu_{W\mid C}(c)\otimes\phi(z),\, b
\right\rangle, \;\; z\in\mathcal{Z},
\label{eq:RKHS_bridge_inner_product}
\end{align}
where $\langle\cdot,\cdot\rangle$ denotes the inner product on $\mathcal{H}_W\otimes\mathcal{H}_Z$.
We note that, for each $c\in\mathcal{C}$ and $z\in\mathcal{Z}$, the tensor product $\mu_{W\mid C}(c)\otimes\phi(z)$ is an element of $\mathcal{H}_W\otimes\mathcal{H}_Z$. 
Since $b\in\mathcal{B}\subseteq\mathcal{H}_W\otimes\mathcal{H}_Z$, the inner product in \eqref{eq:RKHS_bridge_inner_product} is well defined.

The representation in \eqref{eq:RKHS_bridge_inner_product} describes the conditional moment under the true law of $(W,C,Z)$. 
Since in practice we only observe the sample $\mathcal{D}$, these quantities are not directly available. 
Therefore, we distinguish between population objects, defined under the true law of $(W,C,Z)$, and empirical objects, constructed from the sample $\mathcal{D}$.
To assess how well a bridge function $b\in\mathcal{B}$ satisfies the CMR in \eqref{eq:generic_cmr}, we introduce the following population risk functional.

\begin{definition}
Let $\nu$ be a fixed reference probability measure on $\mathcal{Z}$. For any $b\in\mathcal{B}$, we define
\begin{align}
L(b)
&=
\int_{\mathcal{Z}}
\mathbb{E}
\left[
\left(
\mathbb{E}[b(W,z)\mid C]-p(z\mid C)
\right)^2
\right]
\,d\nu(z),
\\
&=
\int_{\mathcal{Z}}
\mathbb{E}
\left[
\left(
\left\langle
\mu_{W\mid C}(C)\otimes\phi(z),\, b
\right\rangle
-
p(z\mid C)
\right)^2
\right]
\,d\nu(z).
\label{cmr_pop_true_nuisance}
\end{align}
\end{definition}

The risk $L(b)$ aggregates the squared residual in \eqref{eq:generic_cmr} over $z\in\mathcal{Z}$ and provides a scalar criterion for comparing bridge functions in $\mathcal{B}$.
Since \eqref{eq:generic_cmr} is a functional condition in $z$, the measure $\nu$ determines how the CMR residual is aggregated over $\mathcal{Z}$.
 
In the empirical bridge-learning procedure, we estimate $b$ over the prescribed class $\mathcal{B}$ and include regularization to stabilize estimation.  
Accordingly, we consider the corresponding penalized population objective given by
\begin{align}
L_{\lambda}(b)
=
L(b)+\lambda\|b\|_{\mathcal{H}_W\otimes\mathcal{H}_Z}^{2}.
\label{eq:penalized_population_risk}
\end{align}

Later, we compare the cross-fitted empirical bridge estimator with the best penalized bridge in $\mathcal{B}$ under the true law of $(W,C,Z)$. This motivates the following definition of an oracle comparator.

\begin{definition}[Oracle comparator]\label{def2}
The oracle comparator is any minimizer of the penalized population objective over the bridge class $\mathcal{B}$:
\begin{align}
b_{\lambda}^{\dagger}
\in
\arg\min_{b\in\mathcal{B}} L_{\lambda}(b).
\label{eq:oracle_comparator}
\end{align}
\end{definition}

\begin{remark}
We do not assume that the working bridge class $\mathcal{B}$ contains a bridge satisfying \eqref{eq:generic_cmr}. Accordingly, $b_{\lambda}^{\dagger}$ is defined only as the best penalized approximation over $\mathcal{B}$.
\end{remark}

Next, we turn to the cross-fitted two-stage bridge-learning procedure. 
Stage I estimates the nuisance objects that enter the bridge criterion, while Stage II uses these estimates as plug-ins to construct the bridge estimator over the class $\mathcal{B}$. 
First, we introduce the nuisance objects and then define the corresponding cross-fitted empirical risk.

\subsection{Nuisance objects}

The bridge criterion in \eqref{cmr_pop_true_nuisance} depends on the conditional mean embedding of $W$ given $C$ and the conditional density of $Z$ given $C$.
These quantities are not the primary inferential targets, but they must be estimated to construct the bridge estimator.
Following the terminology of double machine learning, we refer to such intermediate quantities as nuisance objects.

Earlier, $\mu_{W\mid C}$ and $p(z\mid C)$ were used to denote the generic population conditional mean embedding and conditional density.
For notational clarity, from this point onward, we attach the superscript $0$ to indicate the corresponding population objects as
\begin{align}
\eta^{0}
=
\bigl(
\mu_{W\mid C}^{0},\,
p^{0}(\cdot\mid C)
\bigr),
\end{align}
where $\eta^{0}$ denotes the population nuisance collection..
Based on $\eta^{0}$, for any $(c,z)\in\mathcal{C}\times\mathcal{Z}$, we define the corresponding population residual by
\begin{align}
r(b,c,z;\eta^{0})
=
\left\langle
\mu_{W\mid C}^{0}(c)\otimes \phi(z),\, b
\right\rangle
-
p^{0}(z\mid c).
\end{align}
This residual measures the pointwise discrepancy in the CMR for a given bridge $b$.

\subsection{Cross-fitted nuisance and bridge estimation}

Let $K\in\mathbb{N}$ be fixed and assume that $K$ divides $N$, the number of observations in the sample $\mathcal{D}$. Let $\{I_k\}_{k=1}^K$ be a partition of $[N]=\{1,\dots,N\}$ into $K$ disjoint folds equal size $|I_k|=n=N/K$. 
For each $k=1,\dots,K$, let $
I_k^c = [N]\setminus I_k$, $\mathcal{D}_{I_k} = \{(w_i,c_i,z_i):i\in I_k\}$, and $
\mathcal{D}_{I_k^c} = \{(w_i,c_i,z_i):i\in I_k^c\}$.
We refer to $\mathcal{D}_{I_k}$ as the held-out fold and to $\mathcal{D}_{I_k^c}$ as the auxiliary sample.
Next, we describe the two-stage estimation procedure for the bridge function.

\paragraph{Stage I - nuisance estimation}
Stage I requires an estimator of the conditional mean embedding of $W$ given $C$ and an estimator of the conditional density of $Z$ given $C$.
For each fold $k$, we estimate the nuisance objects using only the auxiliary sample $\mathcal{D}_{I_k^c}$.
We denote the resulting estimators by
\begin{align}
\hat{\eta}^{k}
=
\bigl(
\hat{\mu}_{W\mid C}^{\,k},\,
\hat{p}^{\,k}(\cdot\mid C)
\bigr).
\end{align}
The superscript $k$ labels the held-out fold used in Stage II, while the estimators themselves are trained on the auxiliary sample $\mathcal{D}_{I_k^c}$.
For each fold $k$, we define the fold-specific plug-in residual for $(c,z)\in\mathcal{C}\times\mathcal{Z}$ by
\begin{align}
r(b,c,z;\hat{\eta}^{k})
=
\left\langle
\hat{\mu}_{W\mid C}^{k}(c)\otimes \phi(z),\, b
\right\rangle
-
\hat{p}^{k}(z\mid c).
\end{align}

We adapt the kernel-based conditional mean embedding framework of \cite{song2013kernel} to estimate $\hat{\mu}^{\,k}_{W\mid C}$ and use a conditional likelihood-based learner to estimate $\hat{p}^{\,k}(\cdot\mid C)$.
Specifically, let $\psi:\mathcal{C}\to\mathcal{H}_C$ denote a feature map on $\mathcal{C}$, and let $\mathcal{H}_{C\to W}$ be an operator-valued RKHS of maps from $\mathcal{H}_C$ to $\mathcal{H}_W$.
A convenient estimator of the conditional mean embedding is the regularized operator regression
\begin{align}
\nonumber
\hat{C}_{W\mid C}^{k}
\in
\argmin_{C\in\mathcal{H}_{C\to W}}
\frac{1}{|I_k^c|}
\sum_{i\in I_k^c}
\left\|
\varphi(w_i)-C\psi(c_i)
\right\|_{\mathcal{H}_W}^{2}\\
+
\lambda^k_{1}\|C\|_{\mathcal{H}_{C\to W}}^{2},
\end{align}
and the corresponding nuisance estimate is given by $
\hat{\mu}_{W\mid C}^{k}(c)
=
\hat{C}_{W\mid C}^{k}\psi(c)$, for $c\in\mathcal{C}$.

For the conditional density, we use a conditional likelihood-based estimator of the form
\begin{align}
\hat{p}^{k}
\in
\argmax_{p\in\mathcal{P}}
\frac{1}{|I_k^c|}
\sum_{i\in I_k^c}
\log p(z_i\mid c_i),
\end{align}
for a prescribed model class $\mathcal{P}$.
\vspace{5pt}

\paragraph{Stage II - bridge estimation}
Given the nuisance estimates $\hat{\eta}^{k}$, we evaluate candidate bridge functions on the held-out fold $I_k$.
To define the empirical analogue of the population risk in \eqref{cmr_pop_true_nuisance}, the integral over $\mathcal{Z}$ is approximated by Monte Carlo simulation.
For each fold $k$, let $\{\tilde z_{m}^{\,k}\}_{m=1}^{M_k}$ be i.i.d. auxiliary draws from $\nu$, independent of the held-out data $\mathcal{D}_{I_k}$, conditional on $\mathcal{D}_{I_k^c}$. 
The foldwise empirical loss is then defined by
\begin{align}
\hat{L}_k(b)
&=
\frac{1}{|I_k|}
\sum_{i\in I_k}
\frac{1}{M_k}
\sum_{m=1}^{M_k}
\left(
r(b,c_i,\tilde z_m^{\,k};\hat{\eta}^{k})
\right)^2.
\label{eq:foldwise_empirical_loss}
\end{align}

Thus, the held-out sample $\mathcal{D}_{I_k}$ averages over the distribution of $C$, while the auxiliary draws $\{\tilde z_m^{\,k}\}_{m=1}^{M_k}$ approximate integration over $\mathcal{Z}$ with respect to $\nu$.

\noindent The corresponding cross-fitted empirical risk for any $b \in \mathcal{B}$ is 
\begin{align}
\hat{L}^{\cf}(b)
=
\frac{1}{K}\sum_{k=1}^K \hat{L}_k(b)
+
\lambda \|b\|_{\mathcal{H}_W\otimes\mathcal{H}_Z}^{2},
\label{eq:cf_empirical_objective}
\end{align}
where $\lambda$ is the common stage II regularization parameter. 
The cross-fitted bridge estimator is then defined by
\begin{align}
\hat{b}^{\cf}
\in
\argmin_{b\in\mathcal{B}}
\hat{L}^{\cf}(b).
\label{eq:cf_bridge_estimator}
\end{align}

Before comparing the cross-fitted bridge estimator and the oracle comparator, we introduce an intermediate risk.
The intermediate risk separates the effect of nuisance estimation from that of empirical averaging.
The foldwise intermediate risk is given by
\begin{align}
\tilde{L}_k(b)
&=
\int_{\mathcal{Z}}
\mathbb{E}
\left[
\left(
r(b,C,z;\hat{\eta}^{k})
\right)^2
\,\middle|\,
\mathcal{D}_{I_k^c}
\right]
\,d\nu(z),
\label{eq:fold_plugin_population_loss}
\end{align}
and the corresponding cross-fitted intermediate risk is
\begin{align}
\tilde{L}^{\cf}(b)
&=
\frac{1}{K}\sum_{k=1}^K \tilde{L}_k(b)
+
\lambda \|b\|_{\mathcal{H}_W\otimes\mathcal{H}_Z}^{2}.
\label{eq:cf_plugin_population_loss}
\end{align}

This allows us to decompose the oracle comparison error into a Stage I term induced by nuisance estimation and a Stage II term induced by empirical averaging.
The next theorem formalizes this decomposition.

\begin{theorem}[Oracle theorem]
\label{thm:oracle theorem}
Let $b_{\lambda}^{\dagger}$ denote the oracle comparator from Definition~\ref{def2}. Then
\begin{align}
L_{\lambda}(\hat{b}^{\cf})&-L_{\lambda}(b_{\lambda}^{\dagger})
\le
2\sup_{b\in\mathcal{B}}
\left|
L_{\lambda}(b)-\hat{L}^{\cf}(b)
\right|
\label{oracle_comparator_bound}
\\
&\le
2\sup_{b\in\mathcal{B}}
\left|
L_{\lambda}(b)-\tilde{L}^{\cf}(b)
\right|
+
2\sup_{b\in\mathcal{B}}
\left|
\tilde{L}^{\cf}(b)-\hat{L}^{\cf}(b)
\right|.
\label{eq:stage_wise_split}
\end{align}
\end{theorem}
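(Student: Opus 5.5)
The first inequality follows from the standard empirical-risk-minimization comparison, and the second from the triangle inequality. The only property of $\hat b^{\cf}$ used is that it minimizes $\hat L^{\cf}$ over $\mathcal{B}$, as in \eqref{eq:cf_bridge_estimator}. The only property of $b_\lambda^\dagger$ used is that it belongs to $\mathcal{B}$. Its optimality for $L_\lambda$ in Definition~\ref{def2} is not needed for the inequality itself, though it makes the left-hand side nonnegative.

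For \eqref{oracle_comparator_bound}, I would add and subtract the empirical objective at both points:
\begin{align*}
L_{\lambda}(\hat b^{\cf})-L_{\lambda}(b_\lambda^\dagger)
&=\bigl[L_{\lambda}(\hat b^{\cf})-\hat L^{\cf}(\hat b^{\cf})\bigr]
+\bigl[\hat L^{\cf}(\hat b^{\cf})-\hat L^{\cf}(b_\lambda^\dagger)\bigr]\\
&\quad+\bigl[\hat L^{\cf}(b_\lambda^\dagger)-L_{\lambda}(b_\lambda^\dagger)\bigr].
\end{align*}
The middle bracket is $\le 0$ because $\hat b^{\cf}$ minimizes $\hat L^{\cf}$ over $\mathcal{B}$ and $b_\lambda^\dagger\in\mathcal{B}$. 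Each outer bracket is bounded in absolute value by $\sup_{b\in\mathcal{B}}|L_\lambda(b)-\hat L^{\cf}(b)|$, since both arguments lie in $\mathcal{B}$. Together these give the factor $2$.

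For \eqref{eq:stage_wise_split}, I would write, for each $b\in\mathcal{B}$,
\[
L_\lambda(b)-\hat L^{\cf}(b)=\bigl[L_\lambda(b)-\tilde L^{\cf}(b)\bigr]+\bigl[\tilde L^{\cf}(b)-\hat L^{\cf}(b)\bigr].
\]
Applying the triangle inequality pointwise and then bounding the supremum of a sum by the sum of the suprema gives the result after multiplying by $2$. Note that the penalty $\lambda\|b\|^2$ appears identically in $L_\lambda$, $\tilde L^{\cf}$ and $\hat L^{\cf}$, so it cancels in every difference. The two suprema therefore really compare the unpenalized risks $L$, $\frac1K\sum_k\tilde L_k$ and $\frac1K\sum_k\hat L_k$. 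This is the interpretation as a Stage I term and a Stage II term.

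The argument is purely deterministic, so the main obstacle is well-definedness rather than a hard estimate. The existence of the minimizers $\hat b^{\cf}$ and $b_\lambda^\dagger$ is taken as given by their definitions, since the statement says "any minimizer". The suprema may be $+\infty$, in which case the inequalities hold trivially. If $L_\lambda(b_\lambda^\dagger)$ is finite, the rearrangement above is valid in the extended reals. I would check that no expression of the form $\infty-\infty$ arises, or else assume finiteness of the risks on $\mathcal{B}$. This is automatic when $\mathcal{B}$ is norm-bounded and the kernels and densities are bounded. The sharper content, namely rates for the two suprema, is deferred to the stage-wise analysis in Appendix~\ref{app:stagewise} and is not needed for this theorem.
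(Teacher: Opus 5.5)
Your proposal is correct and matches the paper's proof: the paper also adds and subtracts $\hat{L}^{\cf}(\hat{b}^{\cf})$ and $\hat{L}^{\cf}(b_{\lambda}^{\dagger})$, drops the middle difference using the minimizing property of $\hat{b}^{\cf}$, and bounds the two outer terms by the supremum, then obtains the stage-wise split from the pointwise triangle inequality. Your extra remarks are accurate refinements that the paper leaves implicit: only $b_\lambda^\dagger\in\mathcal{B}$ is actually used, the penalty cancels in each difference, and finiteness of the risks is needed to avoid $\infty-\infty$.
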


\begin{proof}
By adding and subtracting $\hat{L}^{\cf}(\hat{b}^{\cf})$ and $\hat{L}^{\cf}(b_{\lambda}^{\dagger})$, we obtain
\begin{align}
\nonumber L_{\lambda}(\hat{b}^{\cf})-L_{\lambda}(b_{\lambda}^{\dagger})
&=
L_{\lambda}(\hat{b}^{\cf})-\hat{L}^{\cf}(\hat{b}^{\cf})
+\\
&\hat{L}^{\cf}(\hat{b}^{\cf})
-
\hat{L}^{\cf}(b_{\lambda}^{\dagger})
+
\hat{L}^{\cf}(b_{\lambda}^{\dagger})-L_{\lambda}(b_{\lambda}^{\dagger}).
\end{align}
By definition, $\hat{b}^{\cf}$ is a minimizer of $\hat{L}^{\cf}$ over $\mathcal{B}$, we have
\begin{align}
\hat{L}^{\cf}(\hat{b}^{\cf})
\le
\hat{L}^{\cf}(b_{\lambda}^{\dagger}).
\end{align}
Hence,
\begin{align}
L_{\lambda}(\hat{b}^{\cf})-L_{\lambda}(b_{\lambda}^{\dagger})
&\le
L_{\lambda}(\hat{b}^{\cf})-\hat{L}^{\cf}(\hat{b}^{\cf})
+
\hat{L}^{\cf}(b_{\lambda}^{\dagger})-L_{\lambda}(b_{\lambda}^{\dagger})
\\
&\le
2\sup_{b\in\mathcal{B}}
\left|
L_{\lambda}(b)-\hat{L}^{\cf}(b)
\right|.
\end{align}
The second bound follows from the triangle inequality:
\begin{align}
\left|
L_{\lambda}(b)-\hat{L}^{\cf}(b)
\right|
\le
\left|
L_{\lambda}(b)-\tilde{L}^{\cf}(b)
\right|
+
\left|
\tilde{L}^{\cf}(b)-\hat{L}^{\cf}(b)
\right|.
\end{align}
Taking the supremum over $b\in\mathcal{B}$ gives \eqref{eq:stage_wise_split}.
\end{proof}
The first term in \eqref{eq:stage_wise_split} is the nuisance-induced error from Stage I.
The second term is the empirical-process error from Stage 2. To provide bounds on the errors from each stage, we state our assumptions and provide a detailed analysis in Appendix~\ref{app:stagewise}.

\section{Simulation}
\label{sec:simulation}

We consider a $2$-step confounded POMDP and focus on the first-stage observation-transition bridge. The system is a low-dimensional nonlinear Gaussian POMDP with binary actions, and the latent state contains a hidden binary contextual factor. The observations consist only of the visible state coordinate and proxy measurements.

We construct the POMDP so that both the logged action and the next observed state depend on the hidden contextual component. Consequently, the observational transition is generally biased relative to the intervention target relevant for planning. The observation-transition bridge is used to recover this deconfounded observation evolution from observable data.
The bridge relation in \eqref{as:d_bridge_condition} identifies the full deconfounded observable transition law. For visualization, we consider a low-dimensional functional of this law, namely the conditional mean of the visible state coordinate contained in $y_{t+1}$. This mean therefore provides a natural and tractable summary of CMR satisfaction. Thus, the reported mean-squared error should be understood as a summary of observation-transition recovery, rather than as a metric for recovery of the full conditional distribution.

We compare a one-shot $50$--$50$ sample-splitting baseline with the proposed $K$-fold cross-fitted estimator. Figure~\ref{fig:transition_mse} reports the mean-squared error for recovery of the deconfounded first-stage observation-transition target as a function of sample size. Cross-fitting improves recovery across the full sample-size range considered: the error decreases from $0.1749$ to $0.0956$ at $N=120$, from $0.1627$ to $0.0797$ at $N=240$, from $0.1032$ to $0.0517$ at $N=480$, from $0.0907$ to $0.0339$ at $N=960$, and from $0.0771$ to $0.0248$ at $N=1920$. These results suggest that both methods improve with larger samples, while the cross-fitted estimator is consistently more sample-efficient than one-shot sample splitting in this setting.

\begin{figure}[t]
    \centering
    \includegraphics[width=\columnwidth]{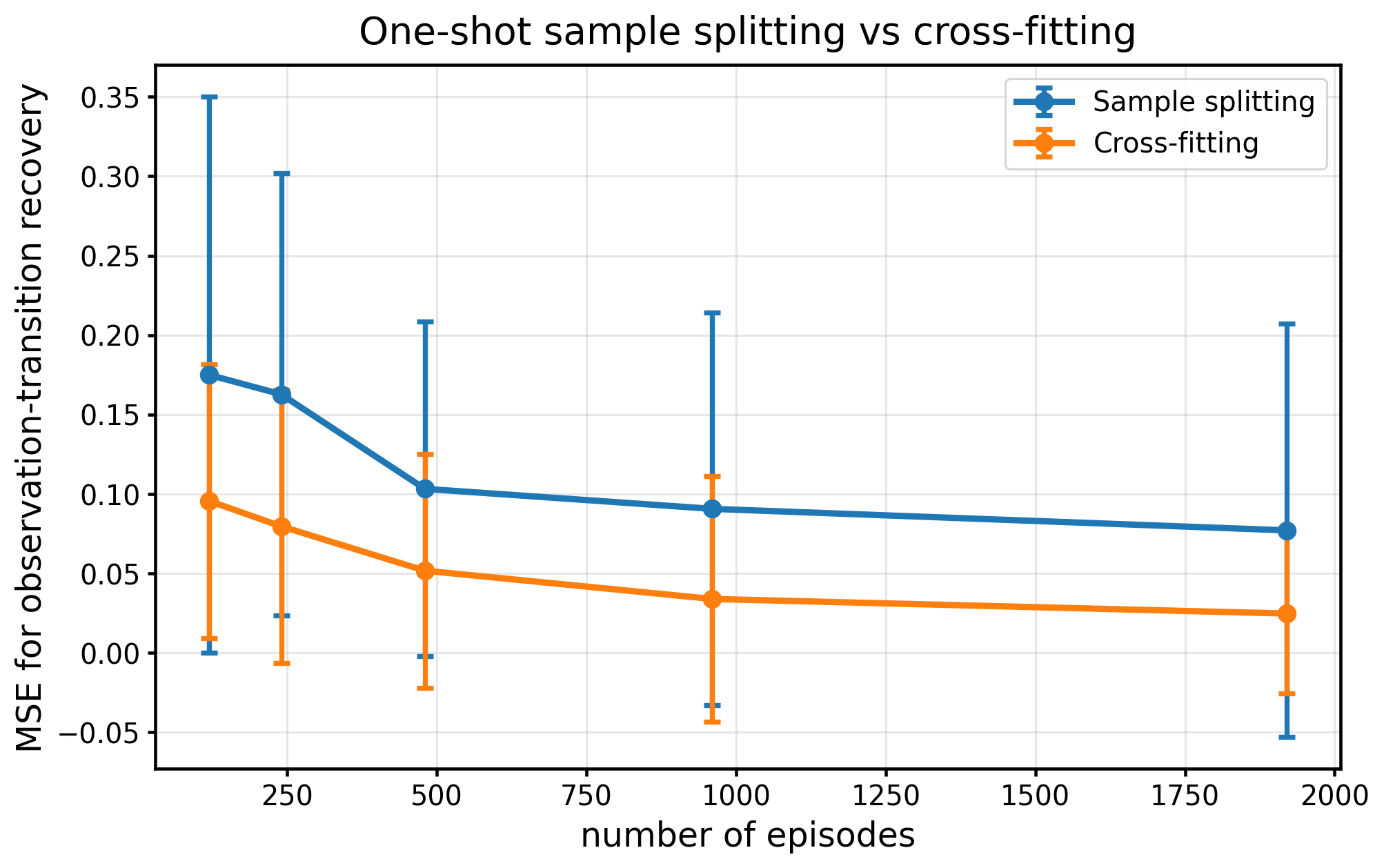}
    \caption{Comparison of observation-transition recovery}
    \label{fig:transition_mse}
\end{figure}

\section{Conclusion}
\label{sec:conclusion}

In this paper, we provided a principled statistical refinement of existing bridge-learning methods for model-based offline reinforcement learning under hidden confounding.
We formulated bridge estimation as a conditional moment restriction problem with nuisance components given by a conditional mean embedding and a conditional density. This leads to a $K$-fold cross-fitted two-stage estimator and enables a sharper statistical analysis than a single-sample split. Our main result establishes an oracle comparator bound and decomposes the excess risk into a first-stage nuisance-estimation term and a second-stage empirical-process term. We provide a principled extension of existing bridge-learning methods for model-based offline reinforcement learning under hidden confounding.

This paper focused on the bridge-learning problem and its statistical analysis. In particular, we studied cross-fitted bridge estimation and its oracle-style guarantees, while leaving the development of Neyman-orthogonal or doubly robust bridge estimators, as well as end-to-end guarantees for policy optimization, to future work. Future work should strengthen the oracle-style analysis into explicit finite-sample bounds under concrete assumptions on the nuisance learners and the bridge class. It should also translate the bridge-estimation error into downstream policy-value error bounds. A further direction is to develop orthogonalized bridge-learning objectives that are less sensitive to first-stage regularization bias and better suited to flexible nuisance estimation.

\bibliographystyle{ieeetr}

\bibliography{References,Latest_IDS}
\vspace{-10pt}
\appendices
\section{Stage-wise Error Analysis}
\label{app:stagewise}

The decomposition in Theorem~\ref{thm:oracle theorem} holds in general for any bridge class $\mathcal{B}$.
To convert it into high-level bounds, next, we impose regularity assumptions and analyze the Stage I and Stage II terms separately.
Throughout this appendix, $\|\cdot\|_{\mathcal{H}_W\otimes\mathcal{H}_Z}$ denotes the RKHS norm on $\mathcal{H}_W\otimes\mathcal{H}_Z$.

\begin{assumption}[Bounded feature map]
\label{bounded_feature_map}
There exists a constant $K_Z<\infty$ such that, $\sup_{z\in\mathcal{Z}}
\|\phi(z)\|_{\mathcal{H}_Z}
\le
K_Z.
$
\end{assumption}

\begin{assumption}[Bounded bridge class]
\label{bounded_bridge_class}
There exists a constant $B<\infty$ such that, $\sup_{b\in\mathcal{B}}
\|b\|_{\mathcal{H}_W\otimes\mathcal{H}_Z}
\le
B.
$
\end{assumption}

\subsection{Stage I error}

We begin with the first term in \eqref{eq:stage_wise_split}.
By the definition of $\tilde{L}^{\cf}$ and the triangle inequality,
\begin{align}
\sup_{b\in\mathcal{B}}
\left|
L_{\lambda}(b)-\tilde{L}^{\cf}(b)
\right|
&=
\sup_{b\in\mathcal{B}}
\left|
\frac{1}{K}\sum_{k=1}^K
\bigl(
L(b)-\tilde{L}_k(b)
\bigr)
\right| 
\\
&\le
\frac{1}{K}\sum_{k=1}^K
\sup_{b\in\mathcal{B}}
\left|
L(b)-\tilde{L}_k(b)
\right|.
\label{eq:stage1_average_bound}
\end{align}
It therefore suffices to control the foldwise absolute difference
$|L(b)-\tilde{L}_k(b)|$.

\begin{lemma}
\label{lem:foldwise_stage1_identity}
For each fold $k$ and each $b\in\mathcal{B}$,
\begin{align}
\nonumber L(b)&-\tilde{L}_k(b)
=\\
&\int_{\mathcal{Z}}
\mathbb{E}
\Bigl[
r(b,C,z;\eta^{0})^2
-
r(b,C,z;\hat{\eta}^{k})^2
\,\Big|\,
\mathcal{D}_{I_k^c}
\Bigr]
\,d\nu(z).\label{eq:true_nuisance_aux_indep}
\end{align}
\end{lemma}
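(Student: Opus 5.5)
The identity is essentially a bookkeeping statement. The plan is to rewrite $L(b)$ as a conditional expectation given the auxiliary sample $\mathcal{D}_{I_k^c}$. Once that is done, $L(b)$ and $\tilde{L}_k(b)$ can be combined under a single integral and a single conditional expectation.

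First I will express $L(b)$ through the population residual. By \eqref{cmr_pop_true_nuisance} and the definition of $r(\cdot;\eta^0)$,
\begin{align*}
L(b)=\int_{\mathcal{Z}}\mathbb{E}\bigl[r(b,C,z;\eta^{0})^2\bigr]\,d\nu(z).
\end{align*}
In this expression $C$ denotes a generic draw from the population law. In \eqref{eq:fold_plugin_population_loss} it plays the role of an independent copy of $C$ (equivalently, a held-out observation $c_i$ with $i\in I_k$), independent of $\mathcal{D}_{I_k^c}$ by the i.i.d.\ sampling.

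Next I will observe that $r(b,C,z;\eta^{0})^2$ involves only the fixed population nuisance $\eta^0$, the fixed $b$ and $z$, and the variable $C$. It is therefore a measurable function of $C$ alone. Since $C$ is independent of $\mathcal{D}_{I_k^c}$,
\begin{align*}
\mathbb{E}\bigl[r(b,C,z;\eta^{0})^2\bigr]=\mathbb{E}\bigl[r(b,C,z;\eta^{0})^2\,\big|\,\mathcal{D}_{I_k^c}\bigr]\quad\text{a.s.}
\end{align*}
This is the key step, and the equation label \eqref{eq:true_nuisance_aux_indep} suggests it is precisely the intended point. The only care needed is to make the independence explicit: the $C$ inside $\tilde{L}_k$ is a fresh draw and not one of the training points. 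If it were a training point, the conditional expectation of the $\eta^0$-term would not reduce to the unconditional one.

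Finally I will subtract $\tilde{L}_k(b)$ as defined in \eqref{eq:fold_plugin_population_loss}. Both terms are now integrals over $\mathcal{Z}$ against $\nu$ of conditional expectations given the same $\sigma$-field. Linearity of conditional expectation and of the $\nu$-integral therefore combines them into the single expression \eqref{eq:true_nuisance_aux_indep}.

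The linearity step requires both integrands to be finite, which I will justify from boundedness. By Assumptions~\ref{bounded_feature_map}--\ref{bounded_bridge_class} and Cauchy--Schwarz, the inner-product parts of both residuals are bounded by $\|\mu(C)\|K_Z B$. The density terms are assumed square integrable under $\nu\otimes P_C$. If necessary, I will state this integrability as a standing condition, since it is already implicit in the definitions of $L$ and $\tilde{L}_k$ being finite. Conditional Fubini then permits the interchange of $\int d\nu$ and $\mathbb{E}[\cdot\mid\mathcal{D}_{I_k^c}]$ wherever needed.
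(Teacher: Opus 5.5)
Your proposal is correct and follows the paper's proof. Both write $L(b)=\int_{\mathcal{Z}}\mathbb{E}[r(b,C,z;\eta^0)^2]\,d\nu(z)$, replace the unconditional expectation by the conditional one given $\mathcal{D}_{I_k^c}$, and subtract $\tilde{L}_k(b)$ by linearity. The paper justifies the middle step only by saying that $\eta^0$ is nonrandom with respect to $\sigma(\mathcal{D}_{I_k^c})$, whereas you identify what is actually needed (that $C$ is a fresh draw independent of the auxiliary sample) and also check the integrability that linearity requires.
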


\begin{proof}
By the definition of $\tilde{L}_k(b)$,
\begin{align}
\nonumber &\int_{\mathcal{Z}}
\mathbb{E}
\Bigl[
r(b,C,z;\eta^{0})^2
-
r(b,C,z;\hat{\eta}^{k})^2
\,\Big|\,
\mathcal{D}_{I_k^c}
\Bigr]
\,d\nu(z)
\\
&=
\int_{\mathcal{Z}}
\mathbb{E}
\Bigl[
r(b,C,z;\eta^{0})^2
\,\Big|\,
\mathcal{D}_{I_k^c}
\Bigr]
\,d\nu(z)
-
\tilde{L}_k(b).
\end{align}
Since $\eta^{0}$ is nonrandom with respect to the sigma algebra $\sigma(\mathcal{D}_{I_k^c})$, we have 
\begin{align}
\mathbb{E}
\Bigl[
r(b,C,z;\eta^{0})^2
\,\Big|\,
\mathcal{D}_{I_k^c}
\Bigr]
=
\mathbb{E}
\bigl[
r(b,C,z;\eta^{0})^2
\bigr].
\end{align}
Substituting this into the previous equation gives us \eqref{eq:true_nuisance_aux_indep}.
\end{proof}

We apply Lemma~\ref{lem:foldwise_stage1_identity} and consider the identity
$|\mathbb{E}[X\mid\mathcal{G}]|
\le
\mathbb{E}[|X|\mid\mathcal{G}]$ to get
\begin{align}
\nonumber &\sup_{b\in\mathcal{B}}
\left|
L(b)-\tilde{L}_k(b)
\right|
\\
&\le
\sup_{b\in\mathcal{B}}
\int_{\mathcal{Z}}
\mathbb{E}
\Bigl[
\bigl|
r(b,C,z;\eta^{0})^2
-
r(b,C,z;\hat{\eta}^{k})^2
\bigr|
\,\Big|\,
\mathcal{D}_{I_k^c}
\Bigr]
\,d\nu(z).
\label{eq:stage1_abs_diff}
\end{align}

\noindent Next, we use the identity
\begin{align}
|a^2-d^2|
=
|a-d||a+d|
\le
|a-d|^2 + 2|d||a-d|.    
\end{align}
We apply this pointwise with
$a=r(b,C,z;\hat{\eta}^{k})$
and
$d=r(b,C,z;\eta^{0})$.
Substituting into \eqref{eq:stage1_abs_diff} gives
\begin{align}
\sup_{b\in\mathcal{B}}
\left|
L(b)-\tilde{L}_k(b)
\right|
\le
\sup_{b\in\mathcal{B}}
\Delta_k(b)
+
2\sup_{b\in\mathcal{B}}
\Gamma_k(b),
\label{eq:stage1_split}
\end{align}
where, 
\begin{align}
&\Delta_k(b)
:=
\int_{\mathcal{Z}}
\mathbb{E}
\Bigl[
\bigl|
a-d\bigr|^2
\,\Big|\,
\mathcal{D}_{I_k^c}
\Bigr]
\,d\nu(z),
\\
&\Gamma_k(b)
:=\int_{\mathcal{Z}}
\mathbb{E}
\Bigl[
|d|
\,
\bigl|a-d
\bigr|
\,\Big|\,
\mathcal{D}_{I_k^c}
\Bigr]
\,d\nu(z).
\end{align}
We refer to $\Delta_k(b)$ as the residual drift term and to
$\Gamma_k(b)$ as the cross-term.

\paragraph{Residual drift bound.}
We first bound $\Delta_k(b)$.
By the definition of the residuals,
\begin{align}
\nonumber r(b,C,z;\eta^{0})
-
r(b,C,z;\hat{\eta}^{k})&
\\
\nonumber=
\langle
\bigl(
\mu_{W\mid C}^{0}(C)
-
\hat{\mu}_{W\mid C}^{k}(C)
\bigr)&\otimes\phi(z),\, b
\rangle\\
&+
\hat{p}^{k}(z\mid C)-p^{0}(z\mid C).
\end{align}
Applying the triangle inequality gives
\begin{align}
\nonumber &\hspace{-20pt}\bigl|
r(b,C,z;\eta^{0})
-
r(b,C,z;\hat{\eta}^{k})
\bigr|
\\
\nonumber &\le
\left|
\left\langle
\bigl(
\mu_{W\mid C}^{0}(C)
-
\hat{\mu}_{W\mid C}^{k}(C)
\bigr)\otimes\phi(z),\, b
\right\rangle
\right|\\
&\hspace{70pt}+
\bigl|
\hat{p}^{k}(z\mid C)-p^{0}(z\mid C)
\bigr|.
\end{align}
We next bound the inner-product term by Cauchy-Schwarz:
\begin{align}
\nonumber &\left|
\left\langle
\bigl(
\mu_{W\mid C}^{0}(C)
-
\hat{\mu}_{W\mid C}^{k}(C)
\bigr)\otimes\phi(z),\, b
\right\rangle
\right|
\\
&\le
\left\|
\bigl(
\mu_{W\mid C}^{0}(C)
-
\hat{\mu}_{W\mid C}^{k}(C)
\bigr)\otimes\phi(z)
\right\|_{\mathcal{H}_W\otimes\mathcal{H}_Z}
\,
\|b\|_{\mathcal{H}_W\otimes\mathcal{H}_Z}
\\
&=
\|\mu_{W\mid C}^{0}(C)-\hat{\mu}_{W\mid C}^{k}(C)\|_{\mathcal{H}_W}
\,
\|\phi(z)\|_{\mathcal{H}_Z}
\,
\|b\|_{\mathcal{H}_W\otimes\mathcal{H}_Z}.
\end{align}
Using Assumptions~\ref{bounded_feature_map}
and~\ref{bounded_bridge_class}, we obtain
\begin{align}
\nonumber &\hspace{-30pt}\bigl|
r(b,C,z;\eta^{0})
-
r(b,C,z;\hat{\eta}^{k})
\bigr|
\\
\nonumber &\le
K_Z B
\|\mu_{W\mid C}^{0}(C)-\hat{\mu}_{W\mid C}^{k}(C)\|_{\mathcal{H}_W}
\\
&\hspace{50pt}+\bigl|
p^{0}(z\mid C)-\hat{p}^{k}(z\mid C)
\bigr|.
\label{eq:stage1_pointwise_residual_bound}
\end{align}

Substituting \eqref{eq:stage1_pointwise_residual_bound} into the
definition of $\Delta_k(b)$ yields
\begin{align}
\Delta_k(b)
&\le
\int_{\mathcal{Z}}
\mathbb{E}
\Bigl[
\bigl(
A_k(C)+B_k(C,z)
\bigr)^2
\,\Big|\,
\mathcal{D}_{I_k^c}
\Bigr]
\,d\nu(z),
\end{align}
where, for brevity, we set 
\begin{align}
A_k(C)
&=
K_Z B
\|\mu_{W\mid C}^{0}(C)-\hat{\mu}_{W\mid C}^{k}(C)\|_{\mathcal{H}_W},
\\
B_k(C,z)
&=
\bigl|
p^{0}(z\mid C)-\hat{p}^{k}(z\mid C)
\bigr|.
\end{align}
Using the identity $(x+y)^2\le 2x^2+2y^2$, we conclude that
\begin{align}
\Delta_k(b)
&\le
2\Delta_{\mu,k}+2\Delta_{p,k},
\label{eq:stage1_drift_final}
\end{align}
where
\begin{align}
\nonumber&\Delta^{\mu}_{k}
=\\
&K_Z^2 B^2
\int_{\mathcal{Z}}
\mathbb{E}
\Bigl[
\|\mu_{W\mid C}^{0}(C)-\hat{\mu}_{W\mid C}^{k}(C)\|_{\mathcal{H}_W}^2
\,\Big|\,
\mathcal{D}_{I_k^c}
\Bigr]
\,d\nu(z),
\\
 &\Delta^p_{k}=\int_{\mathcal{Z}}
\mathbb{E}
\Bigl[
|p^{0}(z\mid C)-\hat{p}^{k}(z\mid C)|^2
\,\Big|\,
\mathcal{D}_{I_k^c}
\Bigr]
\,d\nu(z).
\end{align}

\paragraph{Cross-term bound.}
We next bound $\Gamma_k(b)$. We apply the conditional H\"older inequality with conjugate exponents $p=q=2$ to obtain
\begin{align}
\nonumber &\Gamma_k(b)
\le
\int_{\mathcal{Z}}
\Bigl(
\mathbb{E}
\bigl[
r(b,C,z;\eta^{0})^2
\bigr]
\Bigr)^{1/2}
\\
&\times
\Bigl(
\mathbb{E}
\Bigl[
\bigl(
r(b,C,z;\hat{\eta}^{k})
-
r(b,C,z;\eta^{0})
\bigr)^2
\,\Big|\,
\mathcal{D}_{I_k^c}
\Bigr]
\Bigr)^{1/2}
\,d\nu(z).
\end{align}
A second application of H\"older's inequality over the integral in $z$, again with conjugate exponents $p=q=2$, yields
\begin{align}
\Gamma_k(b)
&\le
L(b)^{1/2}\,
\Delta_k(b)^{1/2}.
\label{eq:stage1_cross_final}
\end{align}

Combining \eqref{eq:stage1_split},
\eqref{eq:stage1_drift_final}, and
\eqref{eq:stage1_cross_final}, we obtain
\begin{align}
\sup_{b\in\mathcal{B}}
\left|
L(b)-\tilde{L}_k(b)
\right|
&\le
2\sup_{b\in\mathcal{B}}
\Delta_k(b)
+
2\sup_{b\in\mathcal{B}}
L(b)^{1/2}\Delta_k(b)^{1/2}.
\end{align}
Hence, the Stage I term is controlled by the conditional errors of the nuisance estimators.

\subsection{Stage II residual}

In this subsection, we study the second term in \eqref{eq:stage_wise_split}.
\noindent By the definitions of $\tilde{L}^{\cf}$ and $\hat{L}^{\cf}$,
\begin{align}
\tilde{L}^{\cf}(b)-\hat{L}^{\cf}(b)
=
\frac{1}{K}\sum_{k=1}^K
\bigl(
\tilde{L}_k(b)-\hat{L}_k(b)
\bigr).
\end{align}
Hence, by the triangle inequality,
\begin{align}
\sup_{b\in\mathcal{B}}
\left|
\tilde{L}^{\cf}(b)-\hat{L}^{\cf}(b)
\right|
&\le
\frac{1}{K}\sum_{k=1}^K
\sup_{b\in\mathcal{B}}
\left|
\tilde{L}_k(b)-\hat{L}_k(b)
\right|.
\label{eq:stage2_fold_reduction}
\end{align}
It therefore suffices to control the foldwise deviation
$\sup_{b\in\mathcal{B}}|\tilde{L}_k(b)-\hat{L}_k(b)|$.

For each fold $k$ and each $b\in\mathcal{B}$, define the quadratic loss
\begin{align}
\ell_b^k(c,z)
:=
\left(
r(b,c,z;\hat{\eta}^{k})
\right)^2,
\qquad (c,z)\in\mathcal{C}\times\mathcal{Z}.
\end{align}
Let $Z^{\nu}$ be an auxiliary random variable with distribution $\nu$, independent of $C$ conditional on $\mathcal{D}_{I_k^c}$.
Then, by the definitions of $\tilde{L}_k$ and $\hat{L}_k$,
\begin{align}
\tilde{L}_k(b)
&=
\mathbb{E}
\Bigl[
\ell_b^k(C,Z^{\nu})
\,\Big|\,
\mathcal{D}_{I_k^c}
\Bigr],
\\
\hat{L}_k(b)
&=
\frac{1}{|I_k|M_k}
\sum_{i\in I_k}
\sum_{m=1}^{M_k}
\ell_b^k(c_i,\tilde z_m^{\,k}).
\end{align}
Therefore,
\begin{align}
\nonumber&\sup_{b\in\mathcal{B}}
\left|
\tilde{L}_k(b)-\hat{L}_k(b)
\right|
\\
&=
\sup_{b\in\mathcal{B}}
\Biggl|
\mathbb{E}
\Bigl[
\ell_b^k(C,Z^{\nu})
\,\Big|\,
\mathcal{D}_{I_k^c}
\Bigr]
-
\frac{1}{|I_k|M_k}
\sum_{i\in I_k}
\sum_{m=1}^{M_k}
\ell_b^k(c_i,\tilde z_m^{\,k})
\Biggr|.
\label{eq:stage2_uniform_deviation}
\end{align}

Conditional on $\mathcal{D}_{I_k^c}$, the nuisance estimate $\hat{\eta}^{k}$ is fixed.
Hence, the loss class
\begin{align}
\mathcal{L}_k
:=
\left\{
(c,z)\mapsto \ell_b^k(c,z)
:
b\in\mathcal{B}
\right\}
\end{align}
is deterministic on fold $k$.
Thus, \eqref{eq:stage2_uniform_deviation} is a standard uniform deviation term for a quadratic loss class.
This is exactly the same empirical-process object that appears in the second-stage analysis of \cite{hong2024model}, with the only difference being that the present argument is conditional on the auxiliary sample.

Accordingly, under the same boundedness and complexity assumptions imposed in \cite{hong2024model} for the corresponding second-stage loss class, each foldwise residual in \eqref{eq:stage2_fold_reduction} admits the same type of control.
Averaging over the folds then yields the corresponding bound for the full Stage II term.
Thus, the present decomposition does not introduce a new Stage II argument. Rather, it shows that, after conditioning on the auxiliary sample, the Stage II term falls within the same proof template as in \cite{hong2024model}. Cross-fitting changes only the conditioning structure, not the underlying empirical-process mechanism.

\end{document}